\newcommand{\E}{\mathbb{E}}
\newcommand{\A}{\mathcal{A}}
\newcommand{\D}{\mathcal{D}}
\renewcommand{\S}{\mathcal{S}}
\newtheorem{theorem}{Theorem}
\newtheorem{lemma}{Lemma}
\newtheorem{definition}{Definition}
\title{Learning to Prove Trigonometric Identities}
\author{Zhou Liu$^{1*}$, Yujun Li$^{1*}$, Zhengying Liu$^1$, Lin Li$^2$, Zhenguo Li$^1$\\
$^{1}$Huawei Noah's Ark Lab ~~~ $^{2}$HiSilicon Turing Department \\

\{liuzhou15,liyujun9,liuzhengying2,lilin29,Li.Zhenguo\}@huawei.com
}
\begin{document}
\thanks{Both authors contributed equally to this research.}
\maketitle

\begin{abstract}

Automatic theorem proving with deep learning methods has attracted attentions recently. In this paper, we construct an automatic proof system for trigonometric identities. We define the normalized form of trigonometric identities, design a set of rules for the proof and put forward a method which can generate theoretically infinite trigonometric identities. Our goal is not only to complete the proof, but to complete the proof in as few steps as possible. For this reason, we design a model to learn proof data generated by random BFS (rBFS), and it is proved theoretically and experimentally that the model can outperform rBFS after a simple imitation learning. After further improvement through reinforcement learning, we get AutoTrig, which can give proof steps for identities in almost as short steps as BFS (theoretically shortest method), with a time cost of only one-thousandth. In addition, AutoTrig also beats Sympy, Matlab and human in the synthetic dataset, and performs well in many generalization tasks.

\end{abstract}  
\section{Introduction}
In recent years, deep learning has made significant progress in the field of automatic theorem proving (ATP) \cite{Wu2021INTAI,Wang2020LearningTP,Polu2020GenerativeLM}. However, it is still a challenging issue to solve general ATP problems, usually involving complicated symbolic deduction processes, data scarcity and generalization challenge. In this paper, we construct an automatic proof system and 
design an automated prover AutoTrig for trigonometric identities. AutoTrig can prove a trigonometric identity by gradually converting the difference between the two sides of the equation to 0, step by step. We focus on automatically proving trigonometric identities due to the following reasons: 
\begin{itemize}
    \item This problem embodies the general difficulty of automatic theorem proving. Our framework for proving trigonometric identities can be extended to solve other theorem proving problems. 
    \item Data sparsity is a key challenge for deep learning model on theorem proving. In this problem, the issue can be addressed by generating a large number of trigonometric identities.
    \item The form of trigonometric identities is rich and changeable, so it is convenient to study model's generalization ability.
    \item This problem is moderately difficult and does not require advanced mathematical background knowledge. Thus, it facilitates drawing on expert experience to develop algorithms and understanding model's proof. 
\end{itemize}



 
Theorem proving usually requires matching underlying symbol processing tools. Some works use professional languages to formalize theorems \cite{Polu2020GenerativeLM,Han2021ProofAC,Wang2017PremiseSF}. These languages are very expressive and can describe many different types of theorems. But they also require a high level of mathematical literacy for users to understand the final proof. Taking advantage of the relatively limited scope of trigonometric identities, we use Sympy as our underlying symbol processing tool, which can help to complete some of the most basic mathematical operations such as expanding brackets, merging common items, and so on \cite{Sympy}. Sympy is a Python-based library that delivers a simple and intuitive presentation of trigonometric functions in the form of strings.
So we can get rid of the complexity of symbolic processing and focus on decision inference. In addition, due to the small size of Sympy, our final proof system is expected to be lightweight and swift.

Due to the difficulty and the expensiveness to get theorems data from humans, many works use synthetic datasets to overcome data scarcity \cite{Piotrowski2018ATPboostLP,urban2020first,Wang2020LearningTP}. Some works collect proof data by reversing the steps of the generating process \cite{Wu2021INTAI}. But this method also introduces the problem that the inverse is not always the shortest proof. In our work, our goal is not only to complete the proof, but also to complete the proof in as few steps as possible. Considering that most proof problems can be viewed as search problems \cite{Nawaz2019ASO,Kaliszyk2018ReinforcementLO,holden2021machine}, it is best to use BFS to generate proof data. Nevertheless, the complexity of the search space grows exponentially as the number of proving steps increases. The time cost and space cost of BFS is incredibly unacceptable. Hence, we propose rBFS, which reduces the search time by randomly pruning BFS. We prove theoretically and experimentally that, model can outperform rBFS after a simple imitation learning, through learning the proof generated by rBFS. This approach can also be applied to other problems to get shorter proofs in the ATP field. 

Models are always expected to solve new and unfamiliar problems, so the ultimate generalization ability of models is very important \cite{Wu2021INTAI}. We design a method for generating trigonometric identities, which can adjust the distribution of generated data by changing several parameters. We generate a dataset and train a model AutoTrig based on it. Then, we generate several new datasets with different distributions. By testing AutoTrig's performance on these new datasets, we are able to evaluate its generality. We find AutoTrig can still deal with an identity even if we change the order of its terms or change its coefficients. Also, we test AutoTrig's performance on some real problems and find that it still performs well. 

In addition, the interpretability of models has always been a difficulty in deep learning, \cite{cheng2020research,li2021interpretable,zhang2018visual}. For this reason, we make a user study to carefully study the differences between the proofs given by AutoTrig and various methods. It is found that AutoTrig can take appropriate actions to make it possible to merge similar terms or use common factors, leading to a shorter proof. This suggests that AutoTrig has indeed learned some mathematical tricks to shorten proof length.

Our contributions mainly include the following parts: 
\begin{itemize}
    \item We construct an automatic proof system for trigonometric identities, including defining the normalized form, designing
    the rule set, generating identities and their proof.
    \item We point out that imitation learning model trained on the proof generated by rBFS can even give shorter proofs than rBFS, which is proved theoretically and experimentally. This method can be extended to other theorem proving problems. 
    \item After reinforcement learning, our final model AutoTrig achieves a similar performance to BFS in terms of proof length, with a time cost of only one-thousandth. 
    \item Our final model AutoTrig beats Sympy and Matlab in the synthetic dataset.
    \item We conduct extensive experiments to present the model's generalization ability and a user study to demonstrate the math reasoning ability learned by AutoTrig. 

\end{itemize}

\section{Related work}
\paragraph{Symbolic math toolbox}
Trigonometric identities can be verified by symbolic computation, which can simplify the difference between the two sides of the equation to zero. Note that there are a few popular related tools including Matlab \cite{MATLAB:R2021a} and Sympy \cite{Sympy}. Matlab is a mature commercial software with a strong ability in numerical and symbolic computation. Sympy is an open-sourced libary which adopt a heuristic method to process trigonometric expressions \cite{Sympy,Fu2006AutomatedAR}.  Both of them can be used to verify trigonometric identities based on their built-in simplification functions.

\paragraph{Learning-based theorem proving}
Deep learning techniques have been widely applied to ATP. Graph neural networks (GNNs) \cite{Scarselli2009TheGN,Daigavane2021UnderstandingCO} and transformers \cite{Vaswani2017AttentionIA} are two commonly used networks to extract features from mathematical theorems. The hierarchical structure of mathematical expressions can be viewed as trees, so that GNNs are employed in many works \cite{Wang2020LearningTP,Paliwal2020GraphRF,Whalen2016HolophrasmAN,Aygun2020LearningTP}. Besides, mathematical expressions can also be regarded as sequences of tokens. Thanks to the strong expressive ability of transformers in natural language processing, many works use transformers to extra features from formulas \cite{Polu2020GenerativeLM,Lample2020DeepLF,Saxton2019AnalysingMR}. There are three learning paradigms in ATP: supervised learning, generative learning, and reinforcement learning. Supervised learning methods are used to learn the mapping from the proof state to the proof tactic \cite{Aygun2020LearningTP,Rocktschel2017EndtoendDP,Wang2017PremiseSF,Crouse2019ImprovingGN}. Generative learning approaches are used to generate intermediate proof steps for proof goals \cite{Polu2020GenerativeLM,Wang2020LearningTP,Polu2022FormalMS}. \cite{Kaliszyk2018ReinforcementLO,Wu2021TacticZeroLT} model the proving processes by Markov decision processes, and use reinforcement learning methods to solve it.

\paragraph{Datasets for theorem proving}
There are many datasets in the field of ATP. These datasets can be divided into two categories: manually formalized datasets and synthetic datasets. \cite{Community2020TheLM} presents MathLib, a unified library of mathematics built with a community-driven effort. \cite{Zheng2021MiniF2FAC} provides a dataset of formal Olympiad-level mathematics problems. \cite{Li2021IsarStepAB} builds the largest non-synthetic dataset from the largest repository of proofs written by human experts in a theorem prover. Manually collected datasets have a small number of samples on each particular kind of problem. There are also some works to artificially synthesize datasets. ATPboost generates negative samples for a binary classification task to estimate the pairwise-relevance of (theorem, premise) pairs \cite{Piotrowski2018ATPboostLP}. MetaGen uses neural generators to generate large amounts of data samples for supervised learning \cite{Wang2020LearningTP}. INT provides a set of manually designed rules to synthesize datasets \cite{Wu2021INTAI}.

\section{Preliminary}
In trigonometry, trigonometric identities are equalities that involve trigonometric functions. They are always true, regardless of the values of the variables involved. $\sin(\cdot)$ and $\cos(\cdot)$ are the two major trigonometric functions which can represent other trigonometric functions like $\tan(\cdot)$, $\cot(\cdot)$, $\csc(\cdot)$, $\sec(\cdot)$. Trigonometric identities have a variety of forms, but we can always manipulate them into polynomial form. In this paper, we concentrate on the trigonometric identities defined in the following normalized form:
\begin{equation}\label{eq:trigonometric_expression}
\begin{aligned}
    & \text{left expression} = \text{right expression}\\
    & \textbf{Normalized form:    } \text{left expression} - \text{right expression} = 0\\
    & \text{expression} = \sum_i \text{term}_i \\ 
    & \text{term}_i = \prod_j a_j \sin^{b_j}(c_j x + d_j) \cos^{e_j}(f_j x + g_j),
\end{aligned}
\end{equation}
where $a_j, b_j, c_j, e_j, f_j$ are integers, $d_j, g_j$ are angles related to $\pi$. We can prove an identity by transforming the difference of the two sides to zero. Table \ref{tab:rule} lists 8 basic theorems of trigonometric identities which can prove various theorems \citep{andreescu2004103}. Our goal is to prove trigonometric identities via these rules within the least steps.

\begin{table*}[h]\small
\centering
\caption{Trigonometric transformation rules.}
\label{tab:rule}
\begin{tabular}{|c|c|c|}
\hline
Type &Form &Abbreviation  \\ \hline
\multirow{4}{*}{Product-to-sum formulas}           
& $\cos \alpha \cos \beta ={\cos(\alpha -\beta )/2+\cos(\alpha +\beta )/2}$ &$P_{cc}$\\ \cline{2-3} 
& $\cos \alpha \sin \beta ={\sin(\alpha +\beta )/2-\sin(\alpha -\beta )/2}$ &$P_{cs}$\\ \cline{2-3}
& $\sin \alpha \cos \beta ={\sin(\alpha +\beta )/2+\sin(\alpha -\beta )/2}$ &$P_{sc}$\\ \cline{2-3} 
& $\sin \alpha \sin \beta ={\cos(\alpha -\beta )/2-\cos(\alpha +\beta )/2}$ &$P_{ss}$\\ \hline
\multirow{4}{*}{Addition and subtraction formulas} 
 
& $\cos(\alpha + \beta) = \cos(\alpha) \cos(\beta) - \sin(\alpha) \sin(\beta) $                     &$A_{c+}$\\ \cline{2-3}
& $\cos(\alpha - \beta) = \cos(\alpha) \cos(\beta) + \sin(\alpha) \sin(\beta) $                     &$A_{c-}$\\ \cline{2-3}
& $\sin(\alpha + \beta) = \sin(\alpha)\cos(\beta) + \cos(\alpha) \sin(\beta) $                      &$A_{s+}$\\ \cline{2-3} 
& $\sin(\alpha - \beta) = \sin(\alpha)\cos(\beta) - \cos(\alpha) \sin(\beta) $                      &$A_{s-}$\\ 

\hline
\end{tabular}
\end{table*}

\begin{figure}[tb!]
    \centering
    \includegraphics[width=0.8\textwidth]{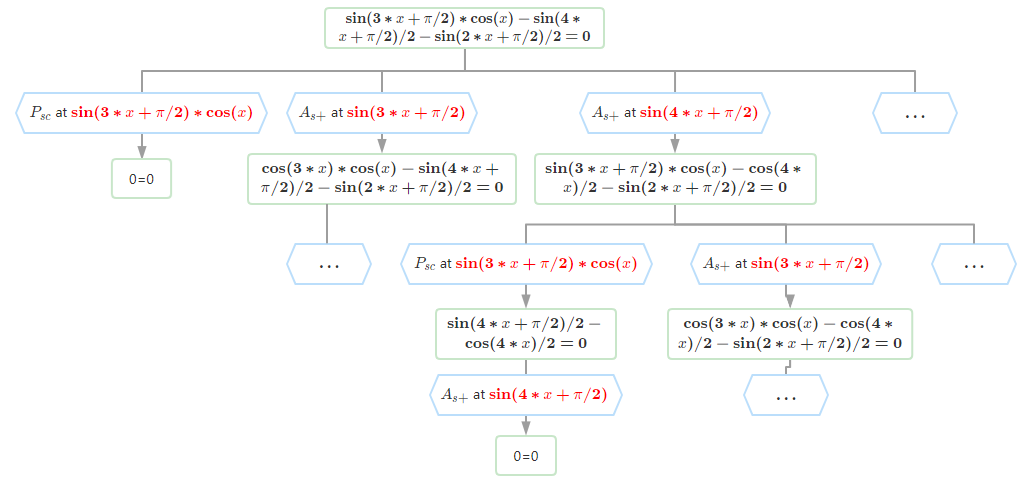}
    \caption{The proof tree of trigonometric identity $\sin(3*x+\pi/2)*\cos(x) - \sin(4*x+\pi/2)/2 - \sin(2*x+\pi/2)/2 = 0$.}
    \label{fig:proof_tree}
\end{figure}

The proving problem can be viewed as a search problem \cite{Nawaz2019ASO,Kaliszyk2018ReinforcementLO,holden2021machine}. As shown in Figure \ref{fig:proof_tree}, the root node represents a trigonometric identity problem. It can select a sequence of transformation actions to promote the proof until we get $0 = 0$. In Figure \ref{fig:proof_tree}, a green box represents a trigonometric expression $s$ which is equal to zero. A blue box is a transformation action $a$ which indicates a specific transformation rule in Table \ref{tab:rule} on a specific object of $s$. We can see that the problem can be proved within one step by choosing the action to apply rule $P_{sc}$ on $\sin(3*x+\pi/2)*\cos(x)$, while it needs more proof steps to complete the proof if choosing other actions. The goal is to find a short trajectory $\{(s_i, a_i)\}_{i=1}^N$ to finish the proof.


\section{Methodology}
At first, we use a random method to generate a lot of trigonometric identities and use rBFS to generate proof for these identities. Then, an imitation learning method is employed to learn a deep learning model, which is followed by a reinforcement learning method to obtain improvement. The complete process is shown in Figure \ref{fig:structure}. It needs to be pointed out that the model trained on the dataset generated by rBFS can even give a shorter proof than rBFS, only after a simple imitation learning. We give a theoretical proof for this point. 

\begin{figure}[htb!]
    \centering
    \includegraphics[width=0.8\textwidth]{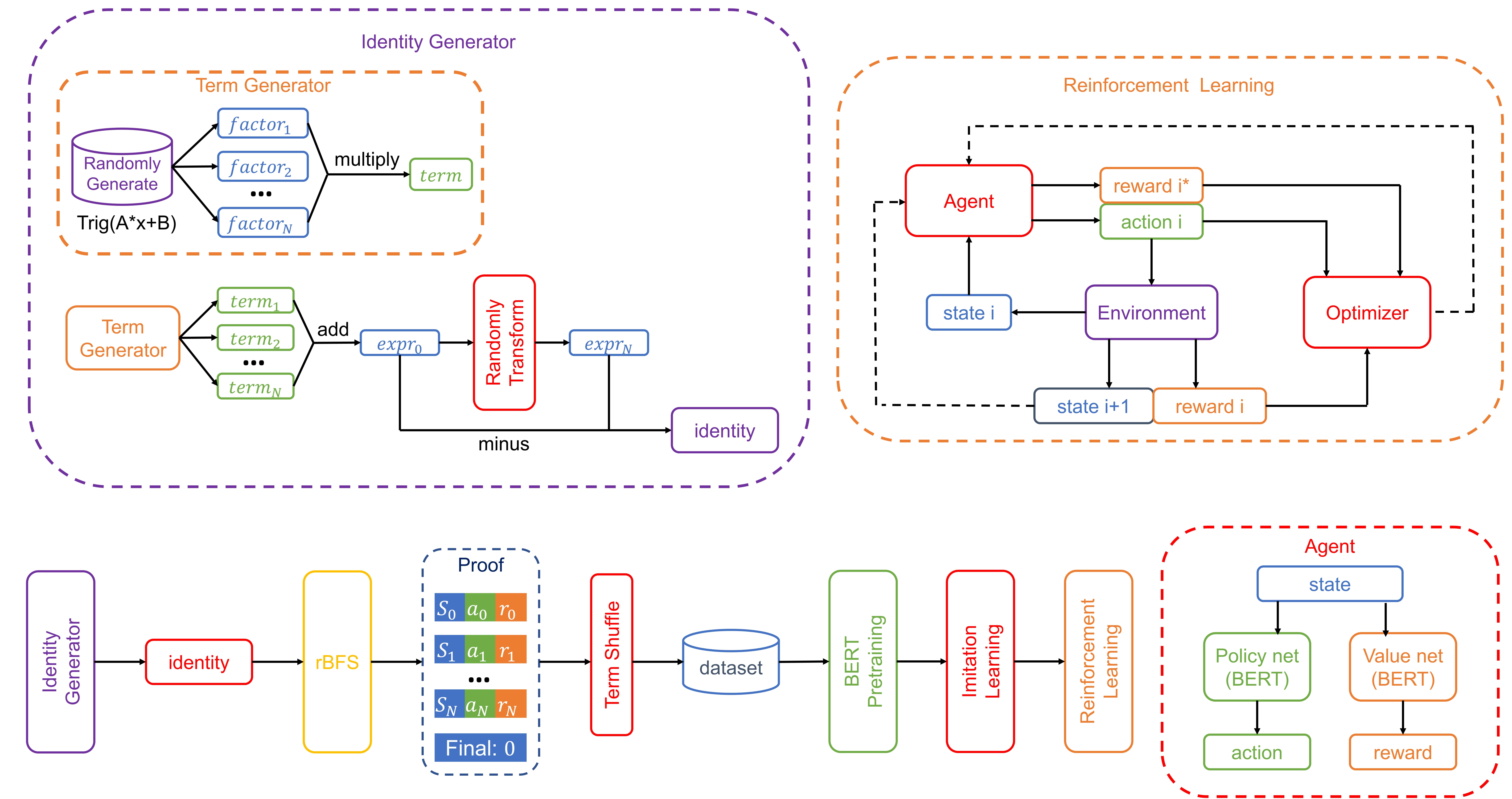}
    \caption{Schematic illustration of our work, including dataset generating, imitation learning, model pretraining and reinforcement learning}
    \label{fig:structure}
\end{figure}

\subsection{Synthetic dataset}

A successful neural network requires a large amount of training data. However it is quite expensive to collect a sufficient number of problems by manual effort. Inspired by the work of INT \cite{Wu2021INTAI}, we propose a bottom-up method to build a dataset of trigonometric identities. It mainly consists of four steps:
\begin{enumerate}
    \item Randomly generate $n$ elements in the form of $\sin(Ax + B)$ or $\cos(Ax + B)$. Multiply these $n$ elements and a  coefficient $C$ to get a term in the form of $C*\prod_{i}^n trig_i(A_ix+B_i)$, where $trig_i$ is selected from $\{ \sin, \cos \}$.
    \item Repeat step 1 for $m$ times and get $m$ terms. Add these terms and get a sum denoted by $E_0$. 
    \item Randomly select several terms and a corresponding rule in Table \ref{tab:rule} to transform the current trigonometric expression $E_0$ and get a new equivalent expression $E_1$.
    \item Repeat step 3 for $t$ times and get an expression $E_t$ which is equivalent to $E_0$. Thus, by subtracting $E_0$ from $E_t$, we can obtain a nontrivial trigonometric identity $E_t - E_0 = 0$.
\end{enumerate}

More details about the generating process are shown in Appendix A. With the above approach, we can generate an unlimited number of trigonometric identities. Note that in the procedure of constructing identities, the transformation rule of each step can be recorded, which can be used for training and learning, just like INT \cite{Wu2021INTAI}. In our work, however, we do not adopt this strategy to get the proof data. Because we notice that the transformation during the generation is random, which means that the reverse transformation sequence may not be the shortest proof. We show a case in the Appendix A to illustrate this situation, which reveals the shortcoming of this method.

In order to get higher-quality (i.e. short, valid) proof data, BFS is the best method, which can derive a solution with the least proving steps by trying all possible rules at each step. However, the computational complexity will grow exponentially as the number of proving steps increases. The Experiment section shows that the average time cost is 166 seconds, which means 100,000 identities will cost more than half a year. Even though with 16 parallel processes, it will take more than one week. To alleviate this issue, we use rBFS instead. The major difference is that BFS tries all possible branches at each node while rBFS tries 3 randomly selected branches. The pairs of node and selected branch $(s_i, a_i)$ along the shortest proof trajectory are collected as the proof data, with more details in Appendix A.

\subsection{Pretraining}
Similar to natural language processing, the context information of trigonometric expressions benefits feature extraction. Recent work has demonstrated that transformer-based neural networks have a great ability to learn representation in ATP \cite{Saxton2019AnalysingMR,Wu2021INTAI,Polu2022FormalMS,Polu2020GenerativeLM,Aygun2020LearningTP}. Thus, we use a popular transformer-based network, BERT \cite{Devlin2019BERTPO}, as the backbone network to extract hidden features from trigonometric expressions. We adopt the key ingredient, masked language modeling, to pretrain BERT on the synthetic data in Section 4.1. Each expression is performed by WordPiece tokenization \cite{Wu2016GooglesNM}. We mask 15\% of all tokens in each expression at random. The objective is to predict the vocabulary id of the masked token based on its context. 
We use a basic BERT which consists of 12 hidden layers with a hidden state size of 768, 12 attention heads and an intermediate dimension of 3072. The token sequence length is set to 256.

\subsection{Learning}
The policy network to select transformation actions is built by adding fully connected networks to the pretrained BERT. When we train the policy network, the parameters of BERT and additional networks are both updated. Based on the synthetic dataset in Section 4.1, the policy network is trained via a typical classification task where $s$ is the input, the one-hot vector of $a$ is the label. It is interesting to see that the simple imitation policy trained on the dataset from rBFS can even get shorter proofs than rBFS.

\begin{definition}
With rBFS exploring the proofs, $L(s)$ denotes the average proof length starting from trigonometric expression $s$, and $L(s, a_i)$ denotes the average proof length after selecting branch $a_i$ starting from node $s$. $L(s')$ denotes the average proof length of $s'$ where $s'$ is transformed from $s$ by selecting transformation action $a$. Then, we have the following recursive definitions,
\begin{align}
    & L(s) = \E_{(a_1, a_2, a_3)} \min \{L(s, a_1), L(s, a_2), L(s, a_3)\} \\
    & L(s, a_i) = L(s') + 1 \\
    & L(s_T) = 0
\end{align}
where $a_1, a_2, a_3$ are three randomly selected branches at node $s$, $s_T$ is the terminal expression $0$.
\end{definition}

\begin{theorem}
Suppose the data from rBFS follows a distribution $(s, a) \sim \D$, where $s$ is an explored trigonometric expression, $a$ is the transformation action to select a branch corresponding to the shortest proof trajectory among three randomly selected branches. The cross entropy is used as the learning objective, 
\begin{align}\label{eq:loss}
    \min_\theta   \E_{(s, a) \sim \D} \left[ - \log P_\theta(a \mid s) \right],
\end{align}
where $P_\theta(\cdot \mid \cdot)$ is the prediction model which gives the probability selecting branch $a$ at node $s$. Then the optimal solution of \eqref{eq:loss} satisfies that $\arg\max_a P_{\theta^*}(a \mid s) = \arg\min_a L(s, a)$.
\end{theorem}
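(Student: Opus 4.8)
The plan is to reduce the optimization problem \eqref{eq:loss} to a collection of independent per-state problems and then invoke the classical fact that cross-entropy minimization recovers the conditional distribution. Concretely, I would first decompose the expected loss over the data distribution $\D$ by conditioning on $s$: write $\E_{(s,a)\sim\D}[-\log P_\theta(a\mid s)] = \E_{s}\big[\E_{a\mid s}[-\log P_\theta(a\mid s)]\big]$. Since the model $P_\theta(\cdot\mid s)$ can in principle output an arbitrary probability vector for each $s$ (this is the key modeling assumption — the network is expressive enough, so we treat $\{P_\theta(\cdot\mid s)\}_s$ as free variables subject only to being valid distributions), the minimization decouples across states: for each fixed $s$ we separately minimize $-\sum_a q(a\mid s)\log P_\theta(a\mid s)$ where $q(a\mid s)$ is the conditional probability under $\D$ that the recorded action is $a$ given state $s$.

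The next step is the standard Gibbs'-inequality argument: for a fixed target distribution $q(\cdot\mid s)$, the functional $p \mapsto -\sum_a q(a\mid s)\log p(a)$ over probability vectors $p$ is minimized uniquely at $p = q(\cdot\mid s)$, because the difference from the minimum equals the KL divergence $\mathrm{KL}(q(\cdot\mid s)\,\|\,p)\ge 0$ with equality iff $p=q(\cdot\mid s)$. Hence the optimal $P_{\theta^*}(a\mid s) = q(a\mid s)$ for every $s$ in the support. So $\arg\max_a P_{\theta^*}(a\mid s) = \arg\max_a q(a\mid s)$, and it remains to show $\arg\max_a q(a\mid s) = \arg\min_a L(s,a)$.

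For this last identification I would unpack how $\D$ is generated according to the theorem's hypothesis. At node $s$, rBFS draws three branches $(a_1,a_2,a_3)$ (I would clarify whether this is with or without replacement, following Appendix A / the Definition, and note the argument is the same either way), and records the action among them achieving the shortest proof trajectory — which, under the averaged-length model of the Definition, means the $a_i$ minimizing $L(s,a_i)$. Therefore $q(a\mid s)$ is the probability that branch $a$ is both sampled into $\{a_1,a_2,a_3\}$ and is the $L(\cdot)$-minimizer among the sampled triple. Ordering the branches so that $L(s,b_1)\le L(s,b_2)\le\cdots$, a branch $b_k$ is selected exactly when it is sampled and none of $b_1,\dots,b_{k-1}$ is sampled; computing these probabilities (hypergeometric, for sampling without replacement among the branch set at $s$) shows $q(b_1\mid s) > q(b_2\mid s) > \cdots$, i.e. $q(a\mid s)$ is strictly decreasing in $L(s,a)$. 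Consequently $\arg\max_a q(a\mid s) = \arg\min_a L(s,a)$, which chains with the previous paragraphs to give the claim.

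The main obstacle I anticipate is the last step: making the monotonicity $q(a\mid s)$ versus $L(s,a)$ rigorous requires a careful combinatorial computation of the selection probabilities, and one must handle ties in $L(s,\cdot)$ and the edge cases where $s$ has fewer than three available branches. A secondary subtlety is the expressiveness assumption needed to decouple the minimization across states — if one wants the statement to hold for the genuine parametric family $P_\theta$ rather than the unconstrained one, the "optimal solution" must be read as the population optimum over all realizable distributions; I would state this assumption explicitly. Everything else (the KL/Gibbs inequality, the tower property) is routine.
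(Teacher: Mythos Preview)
Your proposal is correct and follows essentially the same route as the paper: decouple the cross-entropy loss across states, invoke Gibbs' inequality to identify $P_{\theta^*}(\cdot\mid s)$ with the rBFS selection distribution, and then argue that this distribution is monotone decreasing in $L(s,a)$ via a combinatorial count. The paper makes the last step explicit by writing $\text{Pr}(\text{Event}(s,a_i)) = C_{n-i}^2/C_n^3$ after sorting the branches, and proves Gibbs' inequality via weighted AM--GM (its Lemma~1) rather than citing KL nonnegativity, but these are cosmetic differences; your noted subtleties about ties, fewer-than-three branches, and the expressiveness assumption are in fact points the paper glosses over.
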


Note that rBFS selects proof steps from three randomly selected branches. The above theorem shows that the imitation model trained on the dataset from rBFS can perform better than rBFS. The intuitive explanation is that rBFS produces many samples that are not completely random. The $\mathtt{min}$ operation allows the sample to have an implicit preference. Deep learning models trained on large amounts of data are able to extract this preference. Therefore, the top one branch selected by the deep learning models are able to give shorter proofs than rBFS. The proof details are shown in Appendix B. This theorem can be extended to other cases where the metric is defined by the operator $\min(\cdot)$ or $\max(\cdot)$.


The model obtained from the imitation learning is not optimal since it learns from rBFS which takes a random strategy to finish proofs. As presented in Figure \ref{fig:proof_tree}, this problem can be viewed as a search problem. Thus, we can use reinforcement learning methods to improve the imitation learning model by modeling it with a Markov decision process, with more details in the Appendix C. The reward is set to $1$ when finishing the proof, $-0.1$ otherwise. The objective is to finish the proof with the minimum number of proof steps. This is implemented via an actor-critic algorithm PPO \cite{Schulman2017ProximalPO}. The model from the imitation learning can be used as a warm-start policy. To warm start the value network, it is trained in the dataset from rBFS, 
\begin{align}
  \min_\omega \| V(s_t; \omega) - \sum_i \gamma^i r(s_{t+i}) \|^2,
\end{align}
where $V(\cdot; \omega)$ is the value network, $\omega$ is the network parameter, $r(\cdot)$ is the reward function, $\gamma$ is the discount coefficient. With the initial policy and value network, we use PPO to further update these networks.

\section{Experiment}
In this section, several experiments are performed to validate our proposed methods. We compare our method with several baselines. Further, we compare our method with MATLAB and Sympy. Then, we make a user study to compare our method with humans. Finally, we examine the generalization ability on several new datasets.

\subsection{Experiment Setup}

\subsubsection{Dataset}
With the method introduced in the previous section, we produce about $200,000$ trigonometric identities. About $150,000$ of them are used for model pretraining. rBFS is used to prove these identities, and we get about 600,000 pairs of (trigonometric expression, transformation action). After shuffling addition terms, we finally get about 2,400,000 pairs, which are used for BERT pretraining and imitation learning. 
About $45,000$ of the original identities are used as the initial states in reinforcement learning for generating proof trajectories. 
The $1,000$ of the original identities are used for validation dataset during reforcement training, marked as $D_v$. The last $4,000$ identities are used to evaluate the performance of different methods, marked as $D_t$.

\subsubsection{Evaluation Metrics}
There are three metrics to evaluate a method's performance on proving trigonometric identities: pass rate, average proof length and average time cost. An ideal model is expected to have a high pass rate, a short average proof length and a low average time cost. In addition, due to the limited input length of the BERT network, the number of  terms of each expression in the proof process is constrained to be no more than 8. Otherwise, the transformation action will be regarded as an invalid one. For the sake of fairness, we also set this limitation for other methods. This limitation is also reasonable, since we usually want to get a proof as simple as possible and avoid extremely long expressions during the proving process.


\subsubsection{Baselines}
Several baseline methods are used to compare with our method:  Naive, Filter, BFS, rBFS. Naive method selects a random action among all actions at each step. If the action is invalid, then it fails. However, the random strategy will most likely pick an invalid action. So we add some expert experience to filter invalid actions. Filter method selects a random action among all valid actions at each step. If all actions are invalid, then the search fails. This simple strategy promises a lower time cost, but it can not get a short proof length and a high pass rate. BFS is the most powerful baseline which can complete the proof with the shortest proof length. But its shortcoming is so obvious that it is quite time-consuming to finish the search. rBFS reduces search time by reducing search branches, but also makes the final proof not always the shortest. Considering the influence of randomness, we repeat $10$ times to average the performance of Naive, Filter and rBFS method, in all subsequent tasks.

\subsection{Performance}
\subsubsection{Performance after imitation learning and reinforcement learning}

Figure \ref{fig:foobar}(a) reports the result after imitation learning. Similar to \cite{Polu2022FormalMS}, in each expression it gives $N$ attempts corresponding to the transformation actions of top $N$ prediction probabilities.  
We can find that the model achieves an outstanding performance after imitation training, even if we only take Top-1 policy. As we increase the value of $N$, both the pass rate and average length increase. This reveals that the model with a larger $N$ can solve more complex problems, with time cost increasing by no more than 2\% at the same time. Figure \ref{fig:foobar}(b) shows the performance of model after reinforcement learning and we can observe that the trend of the three indicators changing with N is similar to Figure \ref{fig:foobar}(a). Considering that when N is greater than 5, the improvement of the pass rate is not significant, we adopt the top-5 policy for the subsequent tasks. The model after imitation learning is marked as Model-IL. The model improved by reinforcement learning is marked as AutoTrig. Figure \ref{fig:foobar}(c) shows the model's performance on $D_v$ during the reforcement learning. We can find that the average length decreases rapidly at the beginning and then slowly, with a little increase in pass rate at the same time. This indicates that reinforcement learning further improves the performance of the model effectively.

\subsubsection{Comparison with baselines, Sympy and Matlab}

\begin{figure}
    \centering
    \subfigure[]{\includegraphics[width=0.32\textwidth]{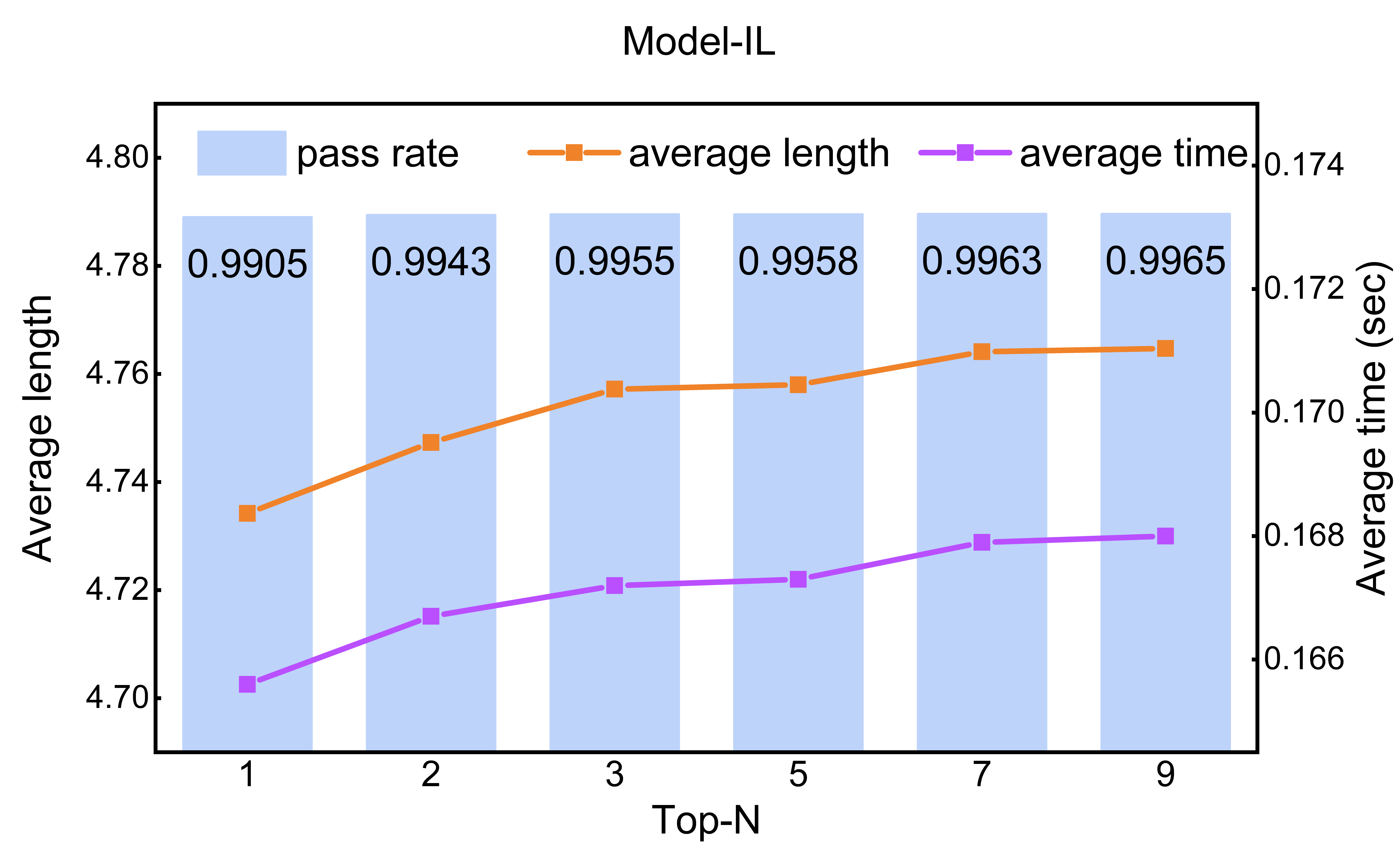}} 
    \subfigure[]{\includegraphics[width=0.32\textwidth]{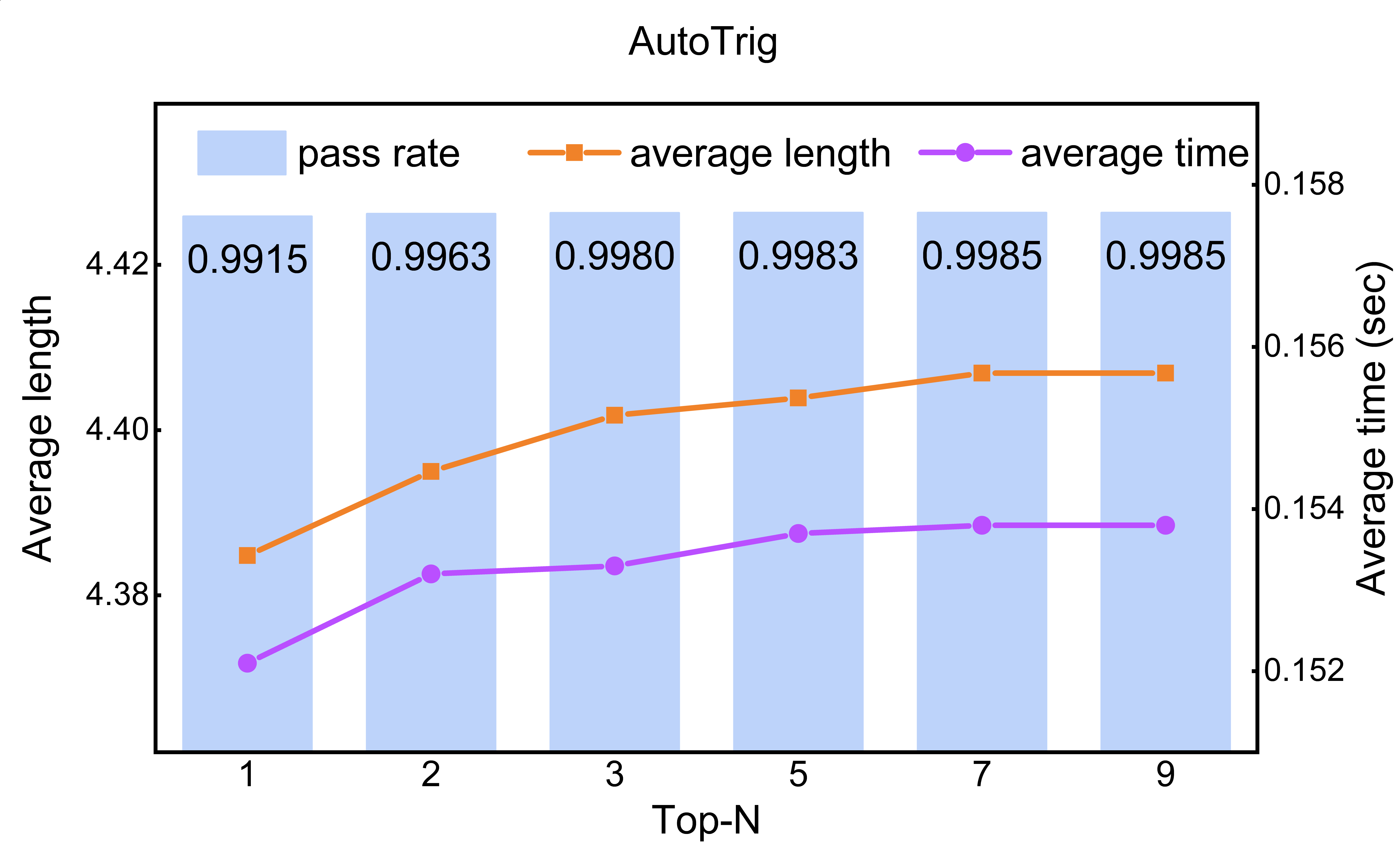}} 
    \subfigure[]{\includegraphics[width=0.32\textwidth]{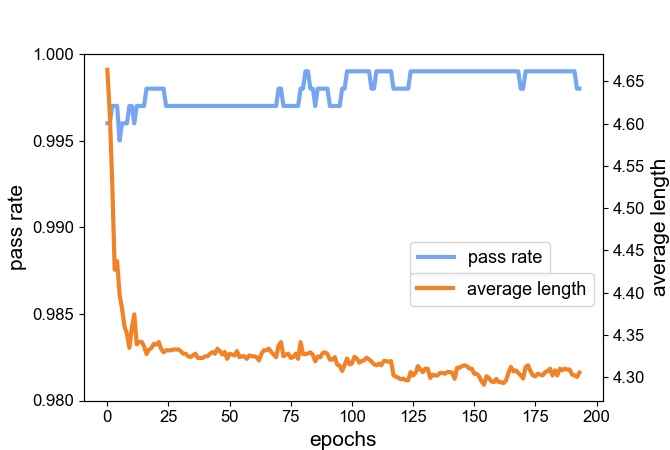}}
    \vspace{-0.1cm}
    \caption{Performance on $D_t$ after imitation learning (a) and reinforcement learning (b). Performance on $D_v$ of AutoTrig during training (c). }
    \label{fig:foobar}
\end{figure}

Table \ref{tab:vs_baselines} demonstrates the performance of different methods on $D_t$. Naive method has a pass rate of 0, without expert experience to filter invalid actions. On the contrast, Filter method can reach a relatively high pass rate of 0.8727. Among all methods, BFS has the highest pass rate and the shortest proof length. This is because it tries all possible proof trajectories. However, its time cost is significantly larger than other methods. Filter method has a small average time cost and a mediocre pass rate but the worst average length among these methods. rBFS achieves a trade-off between average length and average time by randomly cutting off partial searching branches. 

\begin{table}[h]\scriptsize
\centering
\caption{The performance of AutoTrig and different baselines on $D_t$.}
\label{tab:vs_baselines}
\begin{tabular}{|c|c|c|c|c|c|c|c|c|}
\hline
                  method&Naive  & Filter & BFS & rBFS &Model-IL &AutoTrig &Sympy &Matlab\\ \hline
pass rate           &0.0000    &0.8727  &1.0000 &0.9927 &0.9958 &0.9983 &0.9055 &0.9680    \\ \hline
average length &-  &8.39  &4.27  &4.94  &4.76  &4.40  &-  &-\\ \hline
avergate time (sec)   &-     &0.329  &166.141 &17.513   &0.174   &0.159   &0.177   &0.060    \\ \hline
\end{tabular}
\end{table}

As we mentioned earlier, Model-IL is trained on the dataset produced by rBFS. It is interesting to observe that Model-IL gets a competitive pass rate but a obviously smaller average length than rBFS. This is consistent with our theoretical analysis. After further finetuned with reinforcement learning, the pass rate increases slightly, but the average length gets a large improvement. This is because the reward setting of reinforcement learning encourages to produce proofs with shorter length. AutoTrig outperforms rBFS in all three metrics, especially in average length and average time. It's worth noting that AutoTrig achieves similar length of proof as BFS, with a time cost of only one thousandth.

Sympy and Matlab are two mainstream symbolic computing tools that can be used to verify trigonometric identities. We compare AutoTrig with them on $D_t$. Since Sympy and Matlab cannot present step-by-step proof trajectories, we only compare pass rate and average time cost here, as shown in Table \ref{tab:vs_baselines}. The fact that Sympy and Matlab can not prove all problems indicates that the synthetic problems are not trivial. In terms of pass rate, AutoTrig performs better than Sympy and Matlab. The open source work about Sympy shows that it is a rule-based symbolic solver \cite{Fu2006AutomatedAR}. This means that AutoTrig can extract non-trivial patterns which cannot be captured by rule-based methods in Sympy. As for average time cost, we can see that Matlab is the fastest among them which takes about one third of the time of AutoTrig. AutoTrig does not use batch processing to speed up procedures, but solves problems one by one. With the acceleration of GPU, AutoTrig can work as fast as Sympy.

\subsection{User Study}
We recruit six volunteers with a master's degree or above to solve this type of problem manually. On account of the expensiveness of volunteers' time, we select 10 problems and compare their performance with baselines and AutoTrig. As is shown in Table \ref{tab:vs_user}, we can find our model obviously outperforms rBFS and volunteers in both average length and average time. To further understand why AutoTrig achieves excellent performance, we show the proof processes of one case in Appendix D. We find that AutoTrig can understand the connections among terms in a expression and take proper actions to rapidly eliminate terms, so that a shorter proof can be attained.

\begin{table}[htb]\scriptsize
\caption{Performance of Volunteers}
\label{tab:vs_user}
\centering
\begin{tabular}{|c|c|c|c|c|c|c|}
\hline
                  method&Naive   & Filter &BFS &rBFS &AutoTrig &Volunteers \\ \hline
pass rate &0.00 &0.62 &1.00 &0.98 &1.00 &1.00   \\\hline
average length &- &14.66 &5.10 &7.26 &5.60 &7.58   \\\hline
average time &- &0.779 &1429.446 &144.460 &0.221 &162.953   \\\hline
\end{tabular}
\end{table}

\subsection{Generalization Study}

\begin{table}[htb!]\scriptsize 
\caption{Performance of Generalization on T1, T2 and T3.}
\label{tab:generalization_task}
\centering 
\begin{tabular}{|c|ccc|ccc|ccc|}
\hline
\multirow{2}{*}{method} & \multicolumn{3}{c|}{T1}     & \multicolumn{3}{c|}{T2}    & \multicolumn{3}{c|}{T3}      \\
                        & pass rate & length & time   & pass rate & length & time  & pass rate & length & time   \\ \hline
Naive &0.0000   &-    &-   &0.000    &-   &-   &0.00  &-  &-\\
Filter                   & 0.8727    & 8.39   & 0.329   & 0.706     & 9.48  & 0.607 & 1.00      & 9.47    & 0.226   \\
BFS                     & 1.0000    & 4.27   & 166.141 & 1.000     & 5.17  & 1115.767  & 1.00      & 5.10    & 2575.008   \\
rBFS                    & 0.9927    & 4.94   & 17.513  & 0.980     & 5.95  & 35.024 & 1.00      & 5.36    & 593.245 \\
AutoTrig                & 0.9988    & 4.40   & 0.154   & 0.945     & 5.85  & 0.275 & 1.00      & 5.20    & 0.207   \\ \hline
\end{tabular}
\end{table}

\paragraph{Shuffled terms (T1)}
When we test model's performance on $D_t$, we get a new expression after taking the action suggested by model. However, there remains a question that, the newly generated expression actually has many equivalent forms. In Table \ref{tab:vs_baselines}, we keep the terms' order consistent with the default order of Sympy, and then add several zeros at the end to form 8 terms. We call this order the default order, which guarantees that the proof given by the model will not change every time we use it. However, it also introduces a preference that valid actions tend to be at the front of a expression. Hence, we choose to shuffle the terms of generated expressions during the proof process and still test AutoTrig on $D_t$. We repeat 10 times and get the average performance of AutoTrig, as is shown in Table \ref{tab:generalization_task}. We can find AutoTrig still performs well, obviously better than rBFS. This shows that even if the terms of expressions are shuffled, AutoTrig can still understand the expression and predict the correct action.

\paragraph{Unseen coefficients (T2)}
When we generate the dataset, the coefficient of $x$ tends to be small, no more than 20. Hence, we introduce a linear transformation ($x \leftarrow a*x+b$) to extend the range of coefficients and get a new dataset with coefficients in the range from 1 to 360. In the generalization task T2, we apply different methods on the new dataset, and their performance is shown in Table \ref{tab:generalization_task}. The pass rate of AutoTrig 
drops slightly, but remains above 94\%. This shows that even if there are some coefficients in the expression that AutoTrig has never seen before, it can still understand the expression and predict the correct action. 

\paragraph{Real data (T3)}
Considering that artificially generated data may have a bias, in the generalization task T3, we collect several problems from the book \textit{103 Trigonometry Problems From the Training of the USA IMO Team} \cite{andreescu2004103}.In this book, many problems can not be solved by AutoTrig because they are not theorem proving problems or they can not be transformed to a normalized form defined in \textbf{Preliminary}. After some pre-processing with details shown in Appendix E, we finally get 10 problems. Then, we apply different methods on these problems, and their performance is shown in Table \ref{tab:generalization_task}. The result shows that AutoTrig can still solve these real problems and has advantages over other methods. This implies that the abilities AutoTrig learns from synthetic $D_t$ can be effectively used to solve real problems. To be more specific, in the Appendix E, we present proofs of a problem with different methods to show the power of AutoTrig.

\section{Discussion}
We have successfully demonstrated the superiority of AutoTrig. However, it still has some shortcomings, which we have discussed in detail, as shown below.

\paragraph{More than $8$ addition terms}
Due to the limited input length of the BERT network, we previously introduce a constraint that an expression should have no more than 8 terms. With some tweaking, we can still use AutoTrig to prove these long identities. When the expression has more than 8 terms, we can use AutoTrig to deal with its first 8 terms. Repeat the processes above and we can finsh the proof.

\paragraph{More than one variable}
For brevity, we previously introduced a restriction that there is only one variable $x$ in the identity. In fact, there can be any number of variables in a trig identity. If its structure doesn't change, it is still possible to prove it by AutoTrig.
In the Appedix F, we show a case to prove a identity with multiple variables with rules in Table \ref{tab:rule}, after a tweaking in rule $A_{xx}$ . 

\paragraph{Restriction of rule set}
AutoTrig can solve a problem only if the problem can be solved with the rules in the Table 1. However, there are some identities need some extra rules to solve it, which makes it impossible for AutoTrig to solve it alone. As is shown in the Appendix F, we display two problems which need extra rules. Hence, more work need to be done to expand the rule set.

\paragraph{Gap between generated problems and real problems}Some real problems can be more complicated, and some involve extra conditions or summation of series, which introduces difficulty for our system to describe and solve these problems of novel forms.



\section{Conclusion}

In this work, we construct an automatic proof system for trigonometric identities, including defining the normalized form of trigonometric identities, designing the rule set, generating the identities and their proof. In order to achieve a high pass rate of proof and the shortest possible proof steps, we train the model from proof data generated by rBFS. We first prove theoretically that a model trained in this way can outperform rBFS, and then prove this experimentally. This also helps in other theorem-proving work. Through reinforcement learning, the model further improves and we get the final model AutoTrig. AutoTrig can prove trigonometric identities almost as short as BFS, but faster than BFS over a thousand times. User study points out that AutoTrig has successfully learned a mathematical intuition, which helps to pick proper actions to merge similar terms and take use of the common factors, leading to a short proof. Compared with Sympy and Matlab, the mainstream symbol computing processing tools in the market, AutoTrig achieves a better pass rate, showing its superiority. In addition, excellent performance in several generalization tasks proves that AutoTrig has strong generalization ability. At last, we have also discussed its limitations and shortcomings in several cases, which are left for future work.

\bibliographystyle{plain}
\bibliography{references}

\appendix

\section{Synthetic dataset}
\subsection{Synthetic identities}
The full details of generating identities are as follows:
\begin{enumerate}
    \item Randomly generate $n$ elements in the form of $\sin(A*x + B)$ or $\cos(A*x + B)$. Multiply these $n$ elements and a  coefficient $C$ to get a term in the form of $C*\prod_{i}^n trig_i(A_i*x+B_i)$. $n$ is randomly picked from $\{1, 2, 3, 4\}$. A is randomly picked from  $\{0, 1, 2, 3, 4, 5, 6\}$. B is randomly picked from $\{\pm\pi/2, \pm\pi/3, \pm\pi/4, \pm\pi/6, 0\}$. C is randomly picked from $\{0, \pm1, \pm2, \pm3, \pm4\}$. $trig_i$ is randomly picked from $\{\sin,\cos\}$.
    \item Repeat the step 1 for $m$ times and get $m$ terms. Add these terms to get a sum denoted by $E_0$. m is randomly picked from $\{1,2,3\}$.
    \item Randomly transform the current trigonometric expression $E_s$ and get a new equivalent expression $E_{s+1}$. The transform action includes two types. The first type is to deal with the product of two factors in one term in the expression, like $trig_1(A_1*x+B_1)*trig_2(A_2*x+B2)$.  For the product of two factors, we apply the corresponding P-rule on it and we can get a new expression. The second type is to deal with one factor in one term in the expression, like $trig(A*x+B)$. For the single factor, we need to divide $A*x+B$ into $(A-a)*x+(B-b)$ and $a*x+b$, where a is randomly picked from $\{0, 1, 2, 3, 4, 5, 6\}$ and b is randomly picked from $\{0,\pm\pi/2, \pm\pi/3, \pm\pi/4, \pm\pi/6\}$. Then we apply the corresponding A-rule on $trig(((A-a)*x+B-b))+(a*x+b))$ to get a new expression.
    
    \item Repeat the step 3 for $t$ times and get an expression $E_t$ which is equivalent to $E_0$. Thus, by subtracting $E_0$ from $E_t$, we can obtain a nontrivial trigonometric identity $E_t - E_0 = 0$. $t$ is randomly picked from $\{2, 3, 4, 5, 6\}$.
    
    \item Get rid of all the obvious extra factors in $E_t - E_0$. E.g. $\sin(x+\pi/3)*\cos(x) - \cos(x-\pi/6)*\cos(x)$ will be transformed into $\sin(x+\pi/3)-\cos(x-\pi/6)$. 
\end{enumerate}
In Generation 1, we display the process of generating a trigonometric identity using this method. In this case, we take 5 transformation actions to generate the final identity. However, we only need 3 steps to prove the identity equivalent to zero, shown in Proof 1. Hence, the reverse transformation sequence is not a good proof because it may take a long detour. In Generation 2, we show several identities provided by this generation method.

\setcounter{table}{0}
\renewcommand\tablename{Generation}
\begin{table}\small
    \centering
    \caption{The process to generate a trigonometric identity}
    \label{generation:1}
    \begin{tabular}{ll}
    \hline\hline
        $Step_1$: &Generate $E_0 = 2*\sin(3*x)*\sin(2*x+\pi/3)$\\\\
        
        $Step_2$:  &Type-A transformation, divide $3x$ in $\sin(3*x)$ into the sum of $3*x+\pi/3$ and $-\pi/3$,\\
        &get $E_1=\sin(2*x+\pi/3)*\sin(3*x+\pi/3)-sqrt(3)*\sin(2*x+\pi/3)*\cos(3*x+\pi/3)$\\\\
       
        $Step_3$: &Type-P transformation, deal with $\sin(2*x+\pi/3)*\cos(3*x+\pi/3)$,\\
        &get $E_2=sqrt(3)*\sin(x)/2+\sin(2*x+\pi/3)*\sin(3*x+\pi/3)-$\\&$sqrt(3)*\sin(5*x+2*\pi/3)/2$\\\\
        
        $Step_4$: &Type-P transformation, deal with $\sin(2*x+\pi/3)*\sin(3*x+\pi/3)$,\\
        &get $E_3=sqrt(3)*\sin(x)/2-\cos(5*x+2*\pi/3)/2+\cos(x)/2-$\\
        &$sqrt(3)*\sin(5*x+2*\pi/3)/2$\\\\
        
        $Step_5$: &Type-A transformation, divide $5*x+2*\pi/3$ in $\cos(5*x+2*\pi/3)$ into $5*x$ and $2*\pi/3$,\\
        &get $E_4=sqrt(3)*\sin(x)/2+sqrt(3)*\sin(5*x)/4+\cos(5*x)/4+$\\
        &   $\cos(x)/2-sqrt(3)*\sin(5*x+2*\pi/3)/2$\\\\
        
        $Step_6$: &Type-A transformation, divide $5*x+2*\pi/3$ in $\sin(5*x+2*\pi/3)$ into  $5*x$ and $2*\pi/3$,\\
        &get $E_5=sqrt(3)*\sin(x)/2+sqrt(3)*\sin(5*x)/2+\cos(x)/2-\cos(5*x)/2$\\\\
        
        $Step_7$: &Subtract $E_0$ from $E_5$ to get an identity: $sqrt(3)*\sin(x)/2+sqrt(3)*\sin(5*x)/2+$\\
        &$\cos(x)/2-\cos(5*x)/2-2*\sin(3*x)*\sin(2*x+\pi/3)$\\
    \hline\hline
    \end{tabular}
\end{table}

\subsection{Synthetic proof}
After generating a lot of identities, we design a filter to abandon cases with more than 8 items or repeated cases. Then we use rBFS to generate proofs for these identities. A proof can be recorded as $\{(s_0,a_0),...,(s_n,a_n), 0\}$. Here, we design a coding strategy which can use three numbers $i,j,k$ to describe an action taken to transform the current state. $i \in \{0,1,2,3,4,5,6,7\}$, $j \in \{0,1,2,3\}$, $k \in \{-1,0,1,2,3\}$. If $k$ is not equal to -1, $(i, j, k)$ means that we use P-rule to deal with the product of the $j$th and the $k$th factor in the $i$th term of $expr$. If $k$ is equal to -1, $(i, j, k)$ means that we use A-rule to deal with a singe factor, which is the $j$th factor in the $i$th term of $expr$. Swapping $j$ and $k$ has no effect on the actual operation while we deal with a product of two factors, so we set $j>=k$, $(j,k) \in\{(0,-1), (1,-1), (2,-1), (3,-1),(0,0),(1,0),(2,0),(3,0),(1,1),(2,1),(3,1),(2,2),(3,2),(3,3)\}$. Therefore, there are at most 112 actions for each state and we can use a number from 1 to 112 as the label of action. Then, we can use the proof data to train a classification task. In order to make the number of samples for each class similar, for each pair of ($s_i$,$a_i$):
\begin{itemize}
    \item Add extra $0$ to make $s_i$ have 8 terms; 
    \item Randomly shuffle the order of the terms in $s_i$ to get an equivalent $s_i^*$
    \item According to the shuffled $s_i^*$, change $a_i$ to $a_i^*$, and we get a new pair: ($s_i^*$, $a_i^*$)
\end{itemize}

\setcounter{table}{0}
\renewcommand\tablename{Proof}
\begin{table}[ht]\small
    \centering
    \caption{The proof of the generated identity}
    \begin{tabular}{ll}
    \hline\hline
    $S_0$:    & $sqrt(3)*\sin(x)/2+sqrt(3)*\sin(5*x)/2+\cos(x)/2-\cos(5*x)/2-$\\
    &$2*{\color{red}\sin(3*x)*\sin(2*x+\pi/3)}$ \\
    $a_0$:    & rule $P_{ss}$ on ${\color{red}\sin(3*x)*\sin(2*x+\pi/3)}$ \\\\
    
    $S_1$:    & $sqrt(3)*\sin(x)/2+sqrt(3)*\sin(5*x)/2+\cos(x)/2-\cos(5*x)/2-{\color{red}\cos(x-\pi/3)}+$\\
    &$\cos(5*x+\pi/3)$ \\
    $a_1$:    & rule $A_{c-}$ on ${\color{red}\cos(x-\pi/3)}$ \\\\
    
    $S_2$:    & $sqrt(3)*\sin(5*x)/2-\cos(5*x)/2-{\color{red}\cos(5*x+\pi/3)}$ \\
    $a_2$:    & rule $A_{c+}$ on ${\color{red}\cos(5*x+\pi/3)}$ \\\\
    
    $S_3$:    &$0$\\
    \hline\hline
    \end{tabular}
\end{table}

\setcounter{table}{1}
\renewcommand\tablename{Generation}
\begin{table}\small
    \centering
    \caption{Several cases of generated identities}
    \begin{tabular}{ll}
    \hline\hline
    $Case_0$:    & $2*\sin(2*x)*\sin(4*x + \pi/4) - \cos(2*x + \pi/4) + \cos(6*x + \pi/4)$\\\\
    $Case_1$:    &$-\sin(x)*\sin(6*x)*\cos(4*x + \pi/4) + \sin(x + \pi/4)/4 - \cos(7*x)*\cos(4*x + \pi/4)/2 + $\\
    &$\cos(9*x + \pi/4)/4$ \\\\
    $Case_2$:    & $-\sin(x)*\sin(4*x)*\cos(x + \pi/6) - \sin(4*x)*\sin(x + \pi/6)*\cos(x) + \sin(2*x + \pi/3)/2 -$\\
    &$\cos(6*x + \pi/6)/2$\\\\
    $Case_3$:    & $sqrt(3)*\sin(x +\pi/3)/4 - \sin(4*x + \pi/3)*\sin(5*x + \pi/3) + \sin(9*x + \pi/6)/2 +$\\
    &$\cos(x + \pi/3)/4$\\\\
    $Case_4$:    &$-\sin(5*x)/2 + \sin(x + \pi/3)/2 - \cos(2*x + \pi/3)*\cos(3*x + \pi/6)$\\
    \hline\hline
    \end{tabular}
\end{table}

\section{Proof of Theorem 1}
\begin{proof} 
A node of trigonometric expression can appear in different proof trees corresponding to different trigonometric identity problems. Thus, we can rewrite \eqref{eq:loss} in the following,
\begin{align}
    & \min_\theta - \sum_{s \in \S} \sum_{a \in \A_s} \text{Pr}(\text{Event}(s, a)) \log P_\theta(a \mid s) \\
    & s.t. ~ \sum_{a \in \A_s} P_\theta(a \mid s) = 1 \text{ for all } s,
\end{align}
where $\S$ is the set of expressions, $\A_s$ is the set of transformation action at $s$, $\text{Event}(s, a)$ denotes the event that action $a$ at expression $s$ corresponds to the shortest proof path among three branches selected by rBFS. Note that the above problem has no dependency between different $s$, it can be split into $|\S|$ sub-problems for each $s$,
\begin{align}
    & \min_P - \sum_{a \in \A_s} \text{Pr}(\text{Event}(s, a)) \log P_\theta(a \mid s) \\
    & s.t. ~ \sum_{a \in \A_s} P_\theta(a \mid s) = 1.
\end{align}


According to Lemma \ref{lem}, we can get the optimal solution 
\begin{align}
    P_{\theta^*}(a \mid s) = \text{Pr}(\text{Event}(s, a)).
\end{align}
Consider calculating $\text{Pr}(\text{Event}(s, a)$. Suppose that node $s$ has $n$ branches, we sort the $n$ branches according to $L(s, a_1), L(s, a_2), \ldots, L(s, a_n)$ in an ascend order. Then, we reorder $\{a_1, a_2, \ldots, a_n\}$ such that 
\begin{align}
    L(s, a_1) \leq L(s, a_2) \leq \ldots \leq L(s, a_n).
\end{align}
As for $a_i$, we have that 
\begin{align}
    \text{Pr}(\text{Event}(s, a_i)) = \frac{C_{n-i}^2}{C_n^3}.
\end{align}
Let $C_{n-i}^2 = 0$ when $n - i < 2$. Thus, we get that $\arg\max_a P_{\theta^*}(a \mid s) = \arg\min_a L(s, a)$.
\end{proof}

\begin{lemma}\label{lem}
Let $p_i > 0$ for $i \in \{1, 2, \ldots, n\}$, $\sum_i p_i = 1$, $x_i \geq 0$ for $i \in \{1, 2, \ldots, n\}$ and $\sum_i x_i = 1$. We have that
\begin{align} 
\sum_{i=1}^n p_i \log x_i \leq \sum_{i=1}^n p_i \log p_i. 
\end{align}
\end{lemma}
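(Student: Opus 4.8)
The plan is to recognize this as Gibbs' inequality (equivalently, the non-negativity of the Kullback--Leibler divergence between the probability vectors $(p_i)$ and $(x_i)$) and to prove it in one line via Jensen's inequality. First I would rewrite the claimed inequality in the equivalent form
\begin{align}
\sum_{i=1}^n p_i \log \frac{x_i}{p_i} \leq 0,
\end{align}
which is legitimate since each $p_i > 0$, so the ratios $x_i/p_i$ are well defined. If some $x_i = 0$, then $p_i \log x_i = -\infty$ and the left-hand side of the lemma is $-\infty$, so the bound holds trivially; hence I may assume $x_i > 0$ for all $i$.

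Next I would use concavity of $\log$ together with the fact that $(p_1,\ldots,p_n)$ is a probability vector. By Jensen's inequality,
\begin{align}
\sum_{i=1}^n p_i \log \frac{x_i}{p_i} \leq \log\left( \sum_{i=1}^n p_i \cdot \frac{x_i}{p_i} \right) = \log\left( \sum_{i=1}^n x_i \right) = \log 1 = 0,
\end{align}
and rearranging gives $\sum_i p_i \log x_i \leq \sum_i p_i \log p_i$, as claimed. An alternative that avoids invoking Jensen by name is the elementary bound $\log t \leq t-1$ (valid up to the positive multiplicative constant relating different logarithm bases, which is irrelevant to an inequality against $0$): applying it termwise with $t = x_i/p_i$ yields $\sum_i p_i \log(x_i/p_i) \leq \sum_i x_i - \sum_i p_i = 1 - 1 = 0$.

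There is essentially no hard step here; the statement is a classical fact. The only points requiring a moment of care are the boundary behaviour when an $x_i$ vanishes (which, as noted, makes the inequality trivial) and the observation that the hypothesis $p_i > 0$ is exactly what licenses forming the ratios $x_i/p_i$. For completeness one could also note that equality holds precisely when $x_i = p_i$ for every $i$, although the lemma as used in the proof of Theorem 1 only needs the inequality.
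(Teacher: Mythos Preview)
Your proof is correct and is essentially the same argument as the paper's: the paper invokes the weighted AM--GM inequality $\sum_i p_i\,\tfrac{x_i}{p_i}\ge\prod_i(x_i/p_i)^{p_i}$ and then takes logarithms, which is exactly Jensen's inequality for the concave function $\log$ written in exponentiated form. Your treatment is in fact slightly more careful than the paper's, since you explicitly dispose of the boundary case $x_i=0$ and note the equality condition $x_i=p_i$, both of which the paper either omits or states without justification.
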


\begin{proof} 
With the weighted arithmetic mean–geometric mean inequality, we have that 
\begin{align} 
1 = \sum_{i=1}^n p_i \frac{x_i}{p_i} \geq \prod_{i=1}^n \left(\frac{x_i}{p_i}\right)^{p_i}.
\end{align}
Take logarithm on both sides, we have that 
\begin{align} 
0 \geq \sum_{i=1}^n p_i \log \left(\frac{x_i}{p_i}\right) = \sum_{i=1}^n p_i \log x_i - \sum_{i=1}^n p_i \log p_i. 
\end{align} 
Thus, we get the proof. The equality holds if and only if $\frac{x_1}{p_1} = \frac{x_2}{p_2} = \ldots = \frac{x_n}{p_n}$.
\end{proof}

\section{Markov decision process}
As presented in Figure \ref{fig:proof_tree}, this problem can be viewed as a search problem. Thus, it can be modeled by a Markov decision process $M = (\S, \A, P, R, \gamma)$. $\S$ is the set of trigonometric expressions which are in the form of \eqref{eq:trigonometric_expression} and can be simplified to zero. It corresponds to a node in the search tree. $\A$ is the set of transformation actions which correspond to branches in Figure \ref{fig:proof_tree}. $P$ is the transition probability. In our problem, it is a deterministic transformation. $R$ is the reward function that gives the reward $R(s)$ at node $s$. The reward is $1$ when we finish the proof, $-0.1$ otherwise. The goal is to maximize the cumulative reward, $\max_\pi \E_{s_0, s_1, \ldots \sim \pi} \sum_{i} \gamma^i R(s_i)$ where $\pi$ is the action selection policy, $s_0, s_1, \ldots$ is the proof path obtained by $\pi$.

\section{User study}
Here we show the details of proof for \textbf{Problem 1}, with different methods (BFS, AutoTrig, rBFS and volunteer). We can find that AutoTrig can prove the problem within 5 steps, as short as BFS, while rBFS costs 9 steps and the volunteer costs 10 steps. BFS can find the shortest proof by searching all possible paths, but it costs the most time. But AutoTrig can finish this task with the 
least time cost and the shortest length at the same time. The key is that AutorTrig can pick a proper action which is helpful to merge similar terms to decrease the number of terms. Moreover, this ability is not short-sighted. We can find that after the first operation, the number of expression's terms increases from 4 to 5, but after the second operation, the number is shortened to 3.  This suggests that AutoTrig has learned some intuition from training to select the appropriate operation. But for rBFS and volunteer, it is not always easy to choose the right action from so many possible actions. Once they take a bad operation, they will get a very complicated formula, which makes it more difficult to do the next choice, finally leading to a very long proof.

\setcounter{table}{0}
\renewcommand\tablename{Problem}
\begin{table}[h]\footnotesize
    \centering
    \caption{}
    \label{problem:case1}
    \begin{tabular}{l}
    \\
    $-\sin(2*x)*\cos(x + \pi/6)-\sin(x+\pi/6)**2*\cos(x +\pi/3)-$\\
    $\sin(x+\pi/6)*\sin(x + \pi/3)*\cos(x+ \pi/6) + \sin(3*x + \pi/6)$\\
    \end{tabular}
\end{table}

\setcounter{table}{1}
\renewcommand\tablename{Proof}
\begin{table}\footnotesize
    \centering
    \caption{BFS's Proof for Problem 1   (Cost  1273 sec)}
    \begin{tabular}{ll}
    \hline\hline
        $S_0$:& $-\sin(2*x)*\cos(x + \pi/6)-{\color{red}\sin(x+\pi/6)}**2*{\color{red}\cos(x +\pi/3)} -$\\
        &$\sin(x+\pi/6)*\sin(x + \pi/3)*\cos(x+ \pi/6) + \sin(3*x + \pi/6)$\\
        $a_0$: &rule $P_{sc}$ on ${\color{red}\sin(x+\pi/6)*\cos(x+\pi/3)}$ \\\\

        $S_1$: & $-{\color{red}\sin(2*x)*\cos(x + \pi/6)}- \sin(x + \pi/6)*\sin(x + \pi/3)*\cos(x + \pi/6)-$\\
        &$\sin(x + \pi/6)*\sin(2*x + \pi/2)/2+\sin(x + \pi/6)/4+\sin(3*x + \pi/6)$ \\
        $a_1$: &rule $P_{sc}$ on ${\color{red}\sin(2*x)*\cos(x + \pi/6)}$ \\\\
        
        $S_2$: & $-\sin(x-\pi/6)/2-\sin(x+\pi/6)*{\color{red}\sin(x+\pi/3)*\cos(x+\pi/6)}-$\\
        &$\sin(x+\pi/6)*\sin(2*x+\pi/2)/2+\sin(x+\pi/6)/4+\sin(3*x+\pi/6)/2$ \\
        $a_{2}$: &rule $P_{sc}$ on ${\color{red}\sin(x+\pi/3)*\cos(x+\pi/6)}$ \\\\

        $S_3$: & $-\sin(x-\pi/6)/2-\sin(x+\pi/6)*{\color{red}\sin(2*x+\pi/2)}+\sin(3*x+\pi/6)/2$ \\
        $a_3$: &rule $A_{s+}$ on ${\color{red}\sin(2*x+\pi/2)}$ \\\\
        
        $S_4$: & $-\sin(x-\pi/6)/2-{\color{red}\sin(x+\pi/6)*\cos(2*x)}+\sin(3*x+\pi/6)/2$ \\
        $a_{4}$: &rule $P_{sc}$ on ${\color{red}\sin(x+\pi/6)*\cos(2*x)}$ \\\\
        
        $S_5$: &$0$\\
        \hline\hline
    \end{tabular}
\end{table}

\begin{table}\footnotesize
    \centering
    \caption{AutoTrig's Proof for Problem 1    (Cost 0.28 sec)}
    \begin{tabular}{ll}
    \hline\hline
        $S_0$:& $-\sin(2*x)*\cos(x + \pi/6)-\sin(x+\pi/6)**2*\cos(x +\pi/3) -$\\
        &$\sin(x+\pi/6)*{\color{red}\sin(x + \pi/3)*\cos(x+ \pi/6)} + \sin(3*x + \pi/6)$\\
        $a_0$: &rule $P_{sc}$ on ${\color{red}\sin(x + \pi/3)*\cos(x+ \pi/6)}$ \\\\

        $S_1$: & $-\sin(2*x)*\cos(x + \pi/6)-{\color{red}\sin(x+\pi/6)}**2*{\color{red}\cos(x +\pi/3)}-$\\
        &$\sin(x + \pi/6)*\sin(2*x + \pi/2)/2-\sin(x + \pi/6)/4+\sin(3*x + \pi/6)$ \\
        $a_1$: &rule $P_{sc}$ on ${\color{red}\sin(x+\pi/6)*\cos(x+\pi/3)}$ \\\\
        
        $S_2$: & $-\sin(2*x)*\cos(x + \pi/6)-\sin(x+\pi/6)*{\color{red}\sin(2*x+\pi/2)}+\sin(3*x+\pi/6)$ \\
        $a_{2}$: &rule $A_{s+}$ on ${\color{red}\sin(2*x+\pi/2)}$ \\\\

        $S_3$: & $-\sin(2*x)*\cos(x + \pi/6)-{\color{red}\sin(x+\pi/6)*\cos(2*x)}+\sin(3*x+\pi/6)$ \\
        $a_3$: &rule $P_{sc}$ on ${\color{red}\sin(x+\pi/6)*\cos(2*x)}$ \\\\
        
        $S_4$: & $-{\color{red}\sin(2*x)*\cos(x + \pi/6)}+\sin(x-\pi/6)/2 +\sin(3*x+\pi/6)/2$ \\
        $a_{4}$: &rule $P_{sc}$ on ${\color{red}\sin(2*x)*\cos(x + \pi/6)}$ \\\\
        
        $S_5$: &$0$\\
        \hline\hline
    \end{tabular}
\end{table}

\begin{table}\footnotesize
    \centering
    \caption{rBFS's Proof for Problem 1    (Cost 470 sec)}
    \begin{tabular}{ll}
    \hline\hline
        $S_0$:& $-\sin(2*x)*\cos(x + \pi/6)-{\color{red}\sin(x+\pi/6)**2}*\cos(x +\pi/3) -$\\
        &$\sin(x+\pi/6)*\sin(x + \pi/3)*\cos(x+ \pi/6) + \sin(3*x + \pi/6)$\\
        $a_0$: &rule $P_{ss}$ on ${\color{red}\sin(x+\pi/6)**2}$ \\\\

        $S_1$: & $-\sin(2*x)*\cos(x + \pi/6) -\sin(x+\pi/6)*\sin(x + \pi/3)*\cos(x+ \pi/6)+$\\
        &$\sin(3*x + \pi/6)-\cos(x+\pi/3)/2+{\color{red}\cos(x+\pi/3)*\cos(2*x+\pi/3)}/2$ \\
        $a_1$: &rule $P_{cc}$ on ${\color{red}\cos(x+\pi/3)*\cos(2*x+\pi/3)}$ \\\\
        
        $S_2$: & $-\sin(2*x)*\cos(x + \pi/6) -{\color{red}\sin(x+\pi/6)}*\sin(x + \pi/3)*{\color{red}\cos(x+ \pi/6)}+$\\
        &$\sin(3*x + \pi/6)+\cos(x)/4-\cos(x+\pi/3)/2+\cos(3*x+2*\pi/3)/4$ \\
        $a_{2}$: &rule $P_{sc}$ on ${\color{red}\sin(x+\pi/6)*\cos(x+\pi/6)}$ \\\\

        $S_3$: & $-{\color{red}\sin(2*x)*\cos(x + \pi/6)} -\sin(x + \pi/3)*\sin(2*x+\pi/3)/2+$\\
        &$\sin(3*x + \pi/6)+\cos(x)/4-\cos(x+\pi/3)/2+\cos(3*x+2*\pi/3)/4$ \\
        $a_3$: &rule $P_{sc}$ on ${\color{red}\sin(2*x)*\cos(x + \pi/6)}$ \\\\
        
        $S_4$: & $-\sin(x-\pi/6)/2 -\sin(x + \pi/3)*\sin(2*x+\pi/3)/2+\sin(3*x + \pi/6)/2$\\
        &$+\cos(x)/4-{\color{red}\cos(x+\pi/3)}/2+\cos(3*x+2*\pi/3)/4$\\
        $a_4$: &rule $A_{c+}$ on ${\color{red}\cos(x+\pi/3)}$ \\\\
        
        $S_5$: & $sqrt(3)*\sin(x)/4-\sin(x-\pi/6)/2 -{\color{red}\sin(2*x+\pi/3)*\sin(x + \pi/3)}/2+$\\
        &$\sin(3*x + \pi/6)/2+\cos(3*x+2*\pi/3)/4$\\
        $a_5$: &rule $P_{ss}$ on ${\color{red}\sin(2*x+\pi/3)*\sin(x + \pi/3)}$ \\\\
        
        $S_6$: & $sqrt(3)*\sin(x)/4-\sin(x-\pi/6)/2+{\color{red}\sin(3*x+\pi/6)}/2-\cos(x)/4+$\\
        &$\cos(3*x+2*\pi/3)/2$\\
        $a_6$: &rule $A_{s+}$ on ${\color{red}\sin(3*x+\pi/6)}$ \\\\
        
        $S_7$: & $sqrt(3)*\sin(x)/4+sqrt(3)*\sin(3*x)/4-\sin(x-\pi/6)/2-\cos(x)/4+$\\
        &$\cos(3*x)/4+{\color{red}\cos(3*x+2*\pi/3)}/2$\\
        $a_7$: &rule $A_{c+}$ on ${\color{red}\cos(3*x+2*\pi/3)}$ \\\\

        $S_8$: & $sqrt(3)*\sin(x)/4-{\color{red}\sin(x-\pi/6)}/2-\cos(x)/4$\\
        $a_8$: &rule $A_{s-}$ on ${\color{red}\sin(x-\pi/6)}$ \\\\

        $S_9$: &$0$\\

        \hline\hline
    \end{tabular}
\end{table}

\begin{table}\footnotesize
    \centering
    \caption{Volunteer's Proof for Problem 1    (Cost 153 sec)}
    \begin{tabular}{ll}
    \hline\hline
        $S_{0}$:& $-\sin(2*x)*\cos(x + \pi/6)-{\color{red}\sin(x+\pi/6)}**2*{\color{red}\cos(x +\pi/3)} -$\\
        &$\sin(x+\pi/6)*\sin(x + \pi/3)*\cos(x+ \pi/6) + \sin(3*x + \pi/6)$\\
        $a_{0}$: &rule $P_{sc}$ on ${\color{red}\sin(x+\pi/6)*\cos(x+\pi/3)}$ \\\\
        
        $S_{1}$: & $-\sin(2*x)*\cos(x + \pi/6)- {\color{red}\sin(x + \pi/6)}*\sin(x + \pi/3)*{\color{red}\cos(x + \pi/6)}-$\\
        &$\sin(x + \pi/6)*\sin(2*x + \pi/2)/2+\sin(x + \pi/6)/4+\sin(3*x + \pi/6)$ \\
        $a_{1}$: &rule $P_{sc}$ on ${\color{red}\sin(x+\pi/6)*\cos(x+\pi/6)}$ \\\\
        
        $S_{2}$: & $-\sin(2*x)*\cos(x + \pi/6)- \sin(x + \pi/6)*{\color{red}\sin(2*x + \pi/2)}/2+$\\
        &$\sin(x +  \pi/6)/4- \sin(x + \pi/3)*\sin(2*x + \pi/3)/2+\sin(3*x + \pi/6)$ \\
        $a_{2}$: &rule $A_{s+}$ on ${\color{red}\sin(2*x + \pi/2)}$ \\\\
        
        $S_{3}$: & $-\sin(2*x)*\cos(x + \pi/6)- \sin(x + \pi/6)*\cos(2*x)/2+$\\
        &$\sin(x + \pi/6)/4- {\color{red}\sin(x + \pi/3)*\sin(2*x + \pi/3)}/2+\sin(3*x + \pi/6)$ \\
        $a_{3}$: &rule $P_{ss}$ on ${\color{red}\sin(x + \pi/3)*\sin(2*x + \pi/3)}$ \\\\
        
        $S_{4}$: & $-{\color{red}\sin(2*x)*\cos(x + \pi/6)}- \sin(x + \pi/6)*\cos(2*x)/2+$\\
        &$\sin(x + \pi/6)/4+\sin(3*x + \pi/6)- \cos(x)/4+\cos(3*x + 2*\pi/3)/4$ \\
        $a_{4}$: &rule $P_{sc}$ on ${\color{red}\sin(2*x)*\cos(x + \pi/6)}$ \\\\
        
        $S_{5}$: & $-\sin(x - \pi/6)/2- {\color{red}\sin(x + \pi/6)*\cos(2*x)}/2+\sin(x + \pi/6)/4+$\\
        &$\sin(3*x + \pi/6)/2- \cos(x)/4+\cos(3*x + 2*\pi/3)/4$ \\
        $a_{5}$: &rule $P_{sc}$ on ${\color{red}\sin(2*x)*\cos(x + \pi/6)}$ \\\\
        
        $S_{6}$: & $-{\color{red}\sin(x - \pi/6)}/4+\sin(x + \pi/6)/4+\sin(3*x + \pi/6)/4-\cos(x)/4+$\\ 
        &$\cos(3*x + 2*\pi/3)/4$ \\
        $a_{6}$: &rule $A_{s-}$ on ${\color{red}\sin(x - \pi/6)}$ \\\\
        
        $S_{7}$: & $-sqrt(3)*\sin(x)/8+{\color{red}\sin(x + \pi/6)}/4+\sin(3*x + \pi/6)/4-$\\ &$\cos(x)/8+\cos(3*x + 2*\pi/3)/4$ \\
        $a_{7}$: &rule $A_{s+}$ on ${\color{red}\sin(x + \pi/6)}$ \\\\

        $S_{8}$: & ${\color{red}\sin(3*x+\pi/6)}/4+\cos(3*x + 2*\pi/3)/4$ \\
        $a_{8}$: &rule $A_{s+}$ on ${\color{red}\sin(3*x+\pi/6)}$ \\\\
        
        $S_{9}$: & $-sqrt(3)*\sin(3*x)/8+\cos(3*x)/8+{\color{red}\cos(3*x + 2*\pi/3)}/4$ \\
        $a_{9}$: &rule $A_{c+}$ on ${\color{red}\cos(3*x + 2*\pi/3)}$ \\\\
        
        $S_{10}$: &$0$
        
        \\ \hline\hline
    \end{tabular}
\end{table}

\section{Real data}
Considering that artificially generated data may have a bias, we collect some problems from the book \textit{103 Trigonometry Problems From the Training of the USA IMO Team}. In order for the problems to meet the requirements of our model, we have to do some pre-processing on them, including:
\begin{itemize}
    \item Use $\sin$ and $\cos$ to replace $\tan$, $\cot$, $\sec$, $\csc$.
    \item Eliminate the denominator.
    \item Replace the independent variable in the problem with $x$.
\end{itemize}
Here, we show an example, which is the $Introductory$ $Problems$ $13$ in the book. After pre-processing, we can get its equivalent form (\textbf{Problem 2}). And we also show the details of proof for Problem 2, with different methods (BFS, AutoTrig and rBFS). Both BFS and AutoTrig can solve the problem within 4 steps, while rBFS needs 6 steps. We can find that the proof of both BFS and AutoTrig can get an expression with a common factor. The common factor is $\sin(3*x)$ in BFS's proof, and there is a common factor $\cos(3*x)$ in AutoTrig's proof. By extracting common factors, the proof can be shortened so that BFS and AutoTrig outperform rBFS. In the view of time cost, AutoTrig obviously outperforms BFS and rBFS. An interesting point is that BFS even costs less time than rBFS. This is probably because BFS finishes the proof in just four steps, so although it needs to search all branches at each step, it can terminate the task after the fourth search, leading to a even smaller search space than rBFS.

\setcounter{table}{1}
\renewcommand\tablename{Problem}
\begin{table}\footnotesize
    \centering
    \caption{}
    \begin{tabular}{ll}
    \\
    \textbf{Original problem}:    & $\tan(3*x)-\tan(2*x)-\tan(x) = \tan(3*x)*\tan(2*x)*\tan(x)$\\\\
    \textbf{After pre-processing}:  & $-\sin(x)*\sin(2*x)*\sin(3*x) - \sin(x)*\cos(2*x)*\cos(3*x)- $\\
    &$ \sin(2*x)*\cos(x)*\cos(3*x)+\sin(3*x)*\cos(x)*\cos(2*x)$\\
    \end{tabular}
\label{problem:case2}
\end{table}

\setcounter{table}{5}
\renewcommand\tablename{Proof}
\begin{table}\footnotesize
    \centering
    \caption{BFS's Proof for Problem 2    (Cost 14.5 sec)}
    \begin{tabular}{ll}
    \hline\hline
        $S_0$:&$-{\color{red}\sin(x)*\sin(2*x)}*\sin(3*x) - \sin(x)*\cos(2*x)*\cos(3*x)- $\\
    &$ \sin(2*x)*\cos(x)*\cos(3*x)+\sin(3*x)*\cos(x)*\cos(2*x)$\\
        $a_0$: &rule $P_{ss}$ on ${\color{red}\sin(x)*\sin(2*x)}$ \\\\

        $S_1$: & $- {\color{red}\sin(x)*\cos(2*x)}*\cos(3*x)- \sin(2*x)*\cos(x)*\cos(3*x)+$\\
    &$ \sin(3*x)*\cos(x)*\cos(2*x)-\sin(3*x)*\cos(x)/2+\sin(3*x)*\cos(3*x)/2$\\
        $a_1$: &rule $P_{sc}$ on ${\color{red}\sin(x)*\cos(2*x)}$ \\\\
        
        $S_2$: & $\sin(x)*\cos(3*x)/2- {\color{red}\sin(2*x)*\cos(x)}*\cos(3*x)+$\\
    &$ \sin(3*x)*\cos(x)*\cos(2*x)-\sin(3*x)*\cos(x)/2$\\
        $a_{2}$: &rule $P_{sc}$ on ${\color{red}\sin(2*x)*\cos(x)}$ \\\\

        $S_3$:&${\color{brown}\sin(3*x)}*{\color{red}\cos(x)*\cos(2*x)}-{\color{brown}\sin(3*x})*\cos(x)/2-{\color{brown}\sin(3*x)}*\cos(3*x)/2$\\
        $a_3$: &rule $P_{cc}$ on ${\color{red}\cos(x)*\cos(2*x)}$ \\\\

        $S_4$: &$0$\\
        \hline\hline
    \end{tabular}
\end{table}

\begin{table}\footnotesize
    \centering
    \caption{AutoTrig's Proof for Problem  2    (Cost 0.2 sec)}
    \begin{tabular}{ll}
    \hline\hline
        $S_0$:&$-{\color{red}\sin(x)*\sin(2*x)}*\sin(3*x) - \sin(x)*\cos(2*x)*\cos(3*x)- $\\
    &$ \sin(2*x)*\cos(x)*\cos(3*x)+\sin(3*x)*\cos(x)*\cos(2*x)$\\
        $a_0$: &rule $P_{ss}$ on ${\color{red}\sin(x)*\sin(2*x)}$ \\\\

        $S_1$: & $- \sin(x)*\cos(2*x)*\cos(3*x)- {\color{red}\sin(2*x)*\cos(x)}*\cos(3*x)+$\\
    &$ \sin(3*x)*\cos(x)*\cos(2*x)-\sin(3*x)*\cos(x)/2+\sin(3*x)*\cos(3*x)/2$\\
        $a_1$: &rule $P_{sc}$ on ${\color{red}\sin(2*x)*\cos(x)}$ \\\\
        
        $S_2$: & $- \sin(x)*\cos(2*x)*\cos(3*x)-\sin(x)*\cos(3*x)/2+$\\
    &$ \sin(3*x)*{\color{red}\cos(x)*\cos(2*x)}-\sin(3*x)*\cos(x)/2$\\
        $a_{2}$: &rule $P_{cc}$ on ${\color{red}\cos(x)*\cos(2*x)}$ \\\\

        $S_3$: &$- {\color{red}\sin(x)*\cos(2*x)}*{\color{brown}\cos(3*x)}-\sin(x)*{\color{brown}\cos(3*x)}/2+\sin(3*x)*{\color{brown}\cos(3*x)}/2$\\
        $a_3$: &rule $P_{sc}$ on ${\color{red}\sin(x)*\cos(2*x)}$ \\\\

        $S_4$: &$0$\\
        \hline\hline
    \end{tabular}
\end{table}

\begin{table}\footnotesize
    \centering
    \caption{rBFS's Proof for Problem 2    (Cost 15.1 sec)}
    \begin{tabular}{ll}
    \hline\hline
        $S_0$:&$-\sin(x)*\sin(2*x)*\sin(3*x) - \sin(x)*\cos(2*x)*\cos(3*x)- $\\
    &$ \sin(2*x)*\cos(x)*\cos(3*x)+{\color{red}\sin(3*x)*\cos(x)}*\cos(2*x)$\\
        $a_0$: &rule $P_{sc}$ on ${\color{red}\sin(3*x)*\cos(x)}$ \\\\

        $S_1$: & $-{\color{red}\sin(x)}*\sin(2*x)*{\color{red}\sin(3*x)} - \sin(x)*\cos(2*x)*\cos(3*x)- $\\
    &$ \sin(2*x)*\cos(x)*\cos(3*x)+\sin(2*x)*\cos(2*x)/2+\sin(4*x)*\cos(2*x)/2$\\
        $a_1$: &rule $P_{ss}$ on ${\color{red}\sin(x)*\sin(3*x)}$ \\\\
        
        $S_2$: & $- \sin(x)*\cos(2*x)*\cos(3*x)-\sin(2*x)*\cos(x)*\cos(3*x)+ $\\
    &${\color{red}\sin(2*x)*\cos(4*x)}/2+\sin(4*x)*\cos(2*x)/2$\\
        $a_{2}$: &rule $P_{sc}$ on ${\color{red}\sin(2*x)*\cos(4*x)}$ \\\\

        $S_3$: &$- {\color{red}\sin(x)}*\cos(2*x)*{\color{red}\cos(3*x)}-\sin(2*x)*\cos(x)*\cos(3*x)-$\\
    &$\sin(2*x)/4+\sin(4*x)*\cos(2*x)/2+\sin(6*x)/4$\\
        $a_3$: &rule $P_{sc}$ on ${\color{red}\sin(x)*\cos(3*x)}$ \\\\
        
        $S_4$: &$-\sin(2*x)*{\color{red}\cos(x)*\cos(3*x)}+\sin(2*x)*\cos(2*x)/2-$\\
    &$\sin(2*x)/4+\sin(6*x)/4$\\
        $a_{4}$: &rule $P_{cc}$ on ${\color{red}\cos(x)*\cos(3*x)}$ \\\\
        
        $S_5$: &$-{\color{red}\sin(2*x)*\cos(4*x)}/2-\sin(2*x)/4+\sin(6*x)/4$\\
        $a_{5}$: &rule $P_{sc}$ on ${\color{red}\sin(2*x)*\cos(4*x)}$ \\\\
        
        $S_6$: &$0$\\
        \hline\hline
    \end{tabular}
\end{table}

\section{Discussion} 
In Proof 9, we show the proof of a problem with multiple variables. We can find that this problem can be proved with the rules in Table \ref{tab:rule}. This shows that AutoTrig can be used to solve multivariable trigonometric identities, with a little modification about the rule $A_{xx}$. When we apply the rule $A_{xx}$, we need to split all variables from constants. Then, we can use AutoTrig to solve trigonometric identities with multiple factors.  

In Proof 10, we show a case which needs a more complex and flexible strategy to divide the elements in a trig function. In Table \ref{tab:rule}, we only take the simplest strategy and we only split variables from constants. If we divide $x+46*\pi/90$ into $x$ and $46*\pi/90$ and divide $x+\pi/90$ into $x$ and $\pi/90$, we can not finish the proof. But if we divide $x+46*\pi/90$ into $x+\pi/90$ and $\pi/2$, the problem can be proved within only one step. Hence, a proper strategy for dividing terms is important. However, for an expression, we theoretically have an infinite number of ways to divide it. The key is that we need to find the hidden relationship between angles in the expression, so that we can make a proper split. This creates new challenges for AutoTrig and requires more work from us. In Proof 11, we show a case from the fifth IMO Competition in 1963, which even needs an extra factor $\sin(\pi/7)$ during the proving. It's a clever trick, but hard to come up with. The difficulty is that we don't know how to design the action space for such a flexible operation. Because we can theoretically multiply this expression by any non-zero factor. How to choose a proper factor remains a great challenge for AutoTrig.

%
\begin{table}\footnotesize
    \centering
    \caption{Special case1}
    \begin{tabular}{ll}
    \hline\hline
        $S_0$:&$\cos(2*x)+\cos(2*y)-\cos(2*x+2*y)-4*\sin(x)*\sin(y)*{\color{red}\sin(\pi/2-x-y)}-1$\\
        
        $a_0$: & divide $\pi/2-x-y$ in ${\color{red}\sin(\pi/2-x-y)}$ into the sum of $\pi/2$ and $-(x+y)$,\\
        &then apply rule $A_{s-}$ on it\\\\
        
        $S_1$: &$\cos(2*x)+\cos(2*y)-\cos(2*x+2*y)-4*{\color{red}\sin(x)*\sin(y)}*\cos(x+y)-1$\\
        $s_1$: &rule $P_{ss}$ on ${\color{red}\sin(x)*\sin(y)}$\\\\
        
        $S_2$: &$\cos(2*x)+\cos(2*y)-\cos(2*x+2*y)-2*{\color{red}\cos(x-y)*\cos(x+y)} +2*\cos(x+y)**2-1$\\
        $s_2$: &rule $P_{cc}$ on ${\color{red}\cos(x-y)*\cos(x+y)}$\\\\
        
        $S_3$: &$2*{\color{red}\cos(x+y)**2}-1$\\
        $a_3$: &rule $P_{cc}$ on ${\color{red}\cos(x+y)**2}$\\\\
        
        $S_4$: $0$\\
        \hline\hline
    \end{tabular}
\end{table}

\begin{table}\footnotesize
    \centering
    \caption{Special case2}
    \begin{tabular}{ll}
    \hline\hline
        $S_0$:&${\color{red}\sin(x+46*\pi/90)}-\cos(x+\pi/90)$\\
        $a_0$: & divide $x+46*\pi/90$ in ${\color{red}\sin(x+46*\pi/90)}$ into the sum of $x+\pi/90$ and $\pi/2$,\\
        &then apply rule $A_{s+}$ on it\\\\
        $S_1$: &$0$\\
        \hline\hline
    \end{tabular}
\end{table}

\begin{table}\footnotesize
    \centering
    \caption{Special case3}
    \begin{tabular}{ll}
    \hline\hline
        $S_0$:  &$-2*\cos(\pi/7)+2*\cos(2*\pi/7)-2*\cos(3*\pi/7)+1$\\
        $a_0$:&multiply $S_0$ by $sin(\pi/7)$\\\\
        
        $S_1$:&$-2*{\color{red}\cos(\pi/7)*\sin(\pi/7)}+2*\cos(2*\pi/7)*\sin(\pi/7)-2*\cos(3*\pi/7)*\sin(\pi/7)+\sin(\pi/7)$\\
        $a_1$:&rule $P_{cs}$ on ${\color{red}\cos(\pi/7)*\sin(\pi/7)}$\\\\
        
        $S_2$:&$-\sin(2*\pi/7)+2*{\color{red}\cos(2*\pi/7)*\sin(\pi/7)}-2*\cos(\pi/7)*\sin(\pi/7)+\sin(\pi/7)$\\
        $a_2$:&rule $P_{cs}$ on ${\color{red}\cos(2*\pi/7)*\sin(\pi/7)}$\\\\

        $S_3$:&$-\sin(2*\pi/7)+\sin(3*\pi/7)-\sin(\pi/7)-2*{\color{red}\cos(3*\pi/7)*\sin(\pi/7)}+\sin(\pi/7)$\\
        $a_3$:&rule $P_{cs}$ on ${\color{red}\cos(3*\pi/7)*\sin(\pi/7)}$\\\\
        
        $S_4$:&$\sin(3*\pi/7)-{\color{red}\sin(4*\pi/7)}$\\
        $a_4$: & divide $4*\pi/7$ in ${\color{red}\sin(4*\pi/7)}$ into the sum of $\pi$ and $-3*\pi/7$,\\
        &then apply rule $A_{s-}$ on $\sin(\pi-3*\pi/7)$\\\\
        
        $S_5$:&$0$\\
        \hline\hline
    \end{tabular}
\end{table}

\end{document}